\newtcbox{\mybox}[1]{nobeforeafter, colframe=black, colback=white, boxrule=0.5mm, width=\linewidth, arc=0mm, boxsep=0mm, left=5mm, right=5mm}
\newtheorem{proposition}{Proposition}[section]
\newcommand\numeq[1]%
\newtheorem{lemma}{Lemma}[section]
\newtheorem{remark}{Remark}[section]
\newtheorem{assumption}{Assumption}
\newtheorem{problem}{Problem}
\newcommand\Moesays[1]{\textcolor{black}{#1}} 
\title{\LARGE \bf
\Moesays{\textcolor{black}{Adaptive} Non-linear Centroidal MPC with Stability Guarantees \\ for Robust Locomotion of Legged Robots} 
}
\author{Mohamed Elobaid$^1$, Giulio Turrisi$^2$, Lorenzo Rapetti$^1$, Giulio Romualdi$^1$, Stefano Dafarra$^{1}$, \\  Tomohiro Kawakami$^{3}$,  Tomohiro Chaki$^{3}$, Takahide Yoshiike$^{3}$,  Claudio Semini$^2$, and Daniele Pucci$^{1,4}$
\thanks{$^{1}$ Artificial and Mechanical Intelligence (AMI), Istituto Italiano di Tecnologia (IIT); Genoa, Italy {\tt\small {\{firstname.lastname\}@iit.it}}.}%
\thanks{$^{2}$ Dynamic Legged Systems (DLS), Istituto Italiano di Tecnologia (IIT); Genoa, Italy {\tt\small {\{firstname.lastname\}@iit.it}}.}%
\thanks{$^{3}$ Frontier Robotics, Innovative Research Excellence; Honda R\&D,  Saitama, Japan {\tt\small {\{firstname.lastname\}@jp.honda}}}
\thanks{$^{4}$ Machine Learning and Optimisation, The University of Manchester, Manchester, United Kingdom.}%
}
\begin{document}

\maketitle

\thispagestyle{empty}

\pagestyle{empty}

\begin{abstract}     
Nonlinear model predictive locomotion controllers based on the reduced centroidal dynamics are nowadays ubiquitous in legged robots. These schemes, even if they assume an inherent simplification of the robot's dynamics, were shown to endow robots with a step-adjustment capability in reaction to small pushes, and in the case of uncertain parameters - as unknown payloads - they were shown to provide some \enquote{practical}, albeit limited,  robustness. In this work, we provide rigorous certificates of their closed-loop stability via reformulating the \textcolor{black}{online} centroidal MPC controller. This is achieved thanks to a systematic procedure inspired by the machinery of adaptive control, together with ideas coming from Control Lyapunov Functions. Our reformulation, in addition, provides robustness for a class of unmeasured constant disturbances. To demonstrate the generality of our approach, we validated our formulation on a new generation of humanoid robots - the $\SI{56.7}{kg}$ ergoCub, as well as on the commercially available $\SI{21}{kg}$ quadruped robot Aliengo.
\end{abstract}

\smallskip 

\section{Introduction}

Legged robots are attracting considerable interest both from researchers and industry practitioners. New companies and existing industry leaders are joining the race for a \enquote{\textit{general purpose}} robot. This general-purpose robot is expected to perform collaborative and autonomous tasks both in home and work environments \cite{lorenzoICRA2023, turrisi2024collaborative}. This, in turn, requires navigating those environments safely while carrying out collaborative tasks robustly. This paper contributes towards the design of locomotion controllers that ensure a degree of provable robustness and stability.

On the locomotion side, a commonly employed modular architecture that separates the trajectory generation and adjustment blocks, typically running at a \enquote{slower} frequency, and a faster whole-body trajectory tracking block, is well-trialed \cite{Koolen, BenchmarkingGiulio}. The trajectory generation and adjustment blocks usually use simplified \enquote{template} models to allow faster computation, including the Linear Inverted Pendulum \cite{englsberger,griffin_ihmc, benchmarking_laas}, and the centroidal momentum dynamics model \cite{Orin}, which have demonstrated effectiveness and are well documented in the field. Concerning the latter, a common approach for quadruped locomotion is the use of the single-rigid-body model that captures the centroidal dynamics in addition to the robot base orientation evolution and inertial effects \cite{quadruped_1, turrisi2024sampling}.  On the humanoids side, recently a centroidal Model Predictive Controller (MPC) was utilized, in this modular architecture, to allow a humanoid robot to walk \cite{GiulioICRA}. Moreover, by modifying the prediction model and estimating contact wrenches at the hand (under suitable assumptions and contact wrenches parametrization) the robot was shown to walk under the action of a persistent disturbance, namely while carrying a payload \cite{ElobaidICRA}. However, no closed-loop stability nor robustness guarantees were provided from a methodological perspective.


Considerable work is being done to endow the envisioned general-purpose robots with robustness to both impulsive disturbances (e.g. in case of push recovery \cite{push_recovery}) and persistent disturbances (e.g. in case of payload carrying \cite{Harada_1, Harada_2, Kheddar}) while performing collaborative tasks. These requirements naturally lead to the question of endowing the locomotion controller with an \textit{adaptive-control} flavour \cite{adaptive_control_book, Isidori_adaptive, slotine_li}. 
From this adaptive control point of view, the key lies in understanding that the centroidal momentum dynamics, when influenced by unmeasured disturbances, falls within the class of systems known as \textit{parametric-pure-feedback forms}. This fact in turn allows us to leverage the procedure described in the seminal work of Kanellakopoulos, Kokotovic and Morse  \cite{pure_parametric_feedback_form}, extending it to the Multi-Input-Multi-Output case, for designing an adaptation scheme and feedback.

Furthermore, to handle constraints typically imposed at the trajectory adjustment layer on the control and states (e.g. \textcolor{black}{friction cone constraints}), and inspired by the ideas in \cite{clf_qp, clf_nmpc, adaptive_clf_nmpc}, we combine the Lyapunov stability machinery synthesized via the adaptive control approach with the nonlinear Centroidal MPC.  Additionally, we add constraints on the residual dynamics, which in our setting reduces to the angular momentum evolution, thus ensuring the stability of the whole feedback system. 

Compared to the standard literature on stability for MPC controllers (see e.g. \cite[Ch~5]{grune}), the proposed stabilizing constraints are obtained without relying on computing maximal controlled invariant output reachable sets. Additionally, the developed stabilizing constraints rely on non-restrictive assumptions and provide guarantees of robustness to bounded constant disturbances as a by-product thanks to a simple adaptation scheme. Moreover, differently from similar important works on intrinsically stable MPC for humanoid gait generation e.g. \textcolor{black}{\cite{oriolo_tro, smaldoneHumanoids}}, we leverage a nonlinear reduced model for prediction which \textcolor{black}{facilitates explicitly constraining contact forces within the friction cone, allowing to handle non-coplanar contacts \cite{mpc_legged_survey}, performing more agile motions given the more accurate dynamics representation, as well as handling surfaces with different static friction coefficients \cite{RF-MPC}}. Finally, in contrast to similar works on quadrupeds, we derive an adaptation law and stabilizing constraints \textcolor{black}{without simplifying to a \textit{linear} force-based  MPC \cite{mohsen}, nor} relying on the Slotine-Li formulation \cite{slotine_li, hutter}, \textcolor{black}{neither requiring multi-stage optimization as done in \cite{xu}}, thereby \textcolor{black}{considerably streamlining} the controller
synthesis. Consequently, the contributions of this paper are twofold; 
\begin{enumerate}[(i)]
    \item A reformulation of \textcolor{black}{an online} Centroidal MPC with added stability and robustness guarantees is presented. 
    \item To prove the generality of our approach, we experimentally validate the proposed controller in different scenarios under multiple types of disturbances, both on a humanoid and a quadruped robot. Moreover, the code for reproducing the experiments is made open source\footnote{\url{https://github.com/ami-iit/paper_elobaid_2024_stable-centroidal-mpc}}. 
\end{enumerate} 

This paper is organized as follows: Section \ref{sec:prel} presents notations and the necessary machinery used throughout the rest of the manuscript, together with a formal statement of the problem; Section \ref{Sec:three} introduces the proposed reformulation of the Centroidal MPC and states the main results; Section \ref{sec:validation} introduces, in a brief manner, the experimental setup and presents comprehensive experimental validation results. Finally, concluding remarks in Section \ref{sec:conclusion} end the manuscript.

\section{Background}\label{sec:prel}

\subsection{Notation and nomenclature}  Given a vector $x \in \mathbb{R}^n$, $\|x\|$ and $x^\top$ define, respectively, the $\ell_2$ norm and transpose of $x$. \textcolor{black}{Matrices are denoted with capital letters e.g. $M \in \mathbb{R}^{d_1\times d_2}$}. $I_n$ and $0_n$ denote the identity and zero matrices of dimension $n$. For $x \in \mathbb R^3$, $x^\wedge = S(x) : \mathbb R^3 \to \mathfrak{so(3)}$ returns the skew symmetric matrix form of $x$.  
Additionally, the following nomenclature is used;
\begin{itemize}
    \item $h^{\ell}, \ h^{\omega}$ denote the \textit{aggregate} linear and angular momentum of all links referred to the robot center of mass - CoM, and oriented as the inertial frame and $h$ is the vector collecting them.
    \item $p_{CoM}$ is the CoM position referred to the inertial frame.
    \item $p_i$ represents the position of a contact point associated with contact force $i$ referred to the inertial frame.
    \item $m$ is the robot mass \textcolor{black}{and $\ \Vec{g} = \begin{pmatrix}
        0 & 0 & g & 0 & 0 & 0
    \end{pmatrix}^\top$ denotes the gravity acceleration vector with $g = -9.81$}.
\end{itemize}

\subsection{The centroidal momentum dynamics}

The \textit{reduced} dynamics of the centroidal quantities of a rigid body in contact with the environment \textcolor{black}{with $n_c$ control impact forces}, under the influence of $k$ external disturbance forces \textcolor{black}{acting on the CoM} takes the form \cite{Nava}:
\begin{subequations}\label{momentumdyn_ct}
\begin{align}
&\dot{p}_{CoM} = \frac{1}{m}Bh \label{com_dyn}\\
&\dot h = \textcolor{black}{\sum_{i = 1}^{n_c} A_{u_i}(p_{u_i})\Gamma_i u_i + \sum_{i = 1}^{k} A_{\theta_i}(p_{\theta_i})\theta_i}  + m\Vec{g} \label{momentum_dyn} 
\end{align}
\end{subequations}
\textcolor{black}{where $u_i, \ \theta_i \in \mathbb{R}^3$ are control impact forces and external disturbance forces respectively, $\Gamma_i \in \{1, 0\}$ is a variable capturing the status of contact $i$ (see eqn (5) in \cite{GiulioICRA}),  $B = \begin{bmatrix} I_{3} & 0_{3}\end{bmatrix}$ a selector matrix, and for some force $\gamma$}:
\begin{align*}
\textcolor{black}{A_\gamma(p_\gamma)} &= \textcolor{black}{\begin{bmatrix}
     I_{3} \\ S(p_\gamma - p_{CoM}) 
\end{bmatrix}}
\end{align*}
 Whenever mentioned, the nominal (unperturbed) dynamics, is obtained by dropping the terms $A_{\theta_i}(p_{\text{CoM}}, p_{\theta_i})\theta_i$ from (\ref{momentum_dyn}).

\subsection{Parametric-pure feedback forms}

For the sake of clarity of exposition, let a given nonlinear single-input-single-output perturbed system be modeled as
\begin{align}\label{sys_siso_ct}
\dot x &= f_0(x) + \sum_{i=1}^{p} \theta_i f_i(x) + \left[ g_0(x) + \sum_{i=1}^{p} \theta_i g_i(x) \right] u \\
y &= h(x)
\end{align}
with the vector fields $f(x), g(x)$ being complete, the map $h(x)$ vanishes at the stationary points, the states $x \in \mathcal{X} \subset \mathbb{R}^n$, the controls $u \in \mathcal{U} \subset \mathbb{R}$, and assume the output $y \in \mathbb R$ \textcolor{black}{has a well-defined relative degree $r$, equivalently the input-output link} is linearizable via feedback \cite[Ch.~4]{Isidori}. Further, let $\theta \in \mathcal{Q} \subset \mathbb{R}^p$ be a set of unknown parameters in a compact and convex set. Whenever there exists a $\theta-$independent coordinates transformation $\phi(x) : x \mapsto \begin{pmatrix} \xi & \eta\end{pmatrix}$ such that in the new coordinates, system (\ref{sys_siso_ct}) reads:
\begin{align*}
\dot \xi_1 &= \xi_2 + \theta^\top \alpha_1(\xi_1, \xi_2, \eta) \\
\dot \xi_2 &= \xi_3 + \theta^\top \alpha_2(\xi_1, \xi_2,\xi_3, \eta) \\
& \vdots \\
\dot \xi_{r-1} &= \xi_r + \theta^\top \alpha_{r-1}(\xi_1, \dots, \xi_{r}, \eta) \\
\dot \xi_r &= \alpha_0(\xi, \eta) +  \theta^\top \alpha_r(\xi_ \eta) + \beta_0(\xi, \eta)  u \\
\eta &= q_0(\xi, \eta) + \sum_{i = 1}^{p} \theta_i q_i(\xi, \eta)\\
y &= \xi_1
\end{align*}
and such that $\alpha_i(0,0) = 0, \ \forall i = 0,1, \dots, r$ and $\beta_0(0,0) \not = 0$, then the above representation in the new coordinates is referred to as a \textit{parametric-pure feedback system} \cite{pure_parametric_feedback_form}. For systems transformable into the parametric-pure feedback form above, a systematic procedure is detailed in \cite{pure_parametric_feedback_form} for designing robust adaptive feedback controllers. 
\begin{remark}\label{centroidal_dynamics_parametric_pure_feedback}
    Note that setting $\xi_1 = p_{CoM}, \ \xi_2 = h^\ell, \ \eta = h^\omega$,  the perturbed centroidal dynamics (1) is readily transformed into a parametric-pure feedback form \textcolor{black}{with $r = 2$.}
\end{remark}
\subsection{Problem statement}
\begin{assumption}\label{assumption:1}
before stating the problem, we make the following assumption;
    \begin{enumerate}[(i)]
        \item We are given a desired \textit{nominal} reference trajectory for the center of mass, $p^n_{CoM}$ together with both the first and second-order derivatives of the reference.
        \item \Moesays{Consider in (1) that $ m = 1$ \textcolor{black}{and  $n_c = k = 1$}}.
        \item Assume that the disturbance force $\theta$ acting on the CoM is \textit{constant}
    \end{enumerate}  
\end{assumption}

\begin{remark}\label{assumptions_restrictiveness}
    point $(i)$ in Assumption (\ref{assumption:1}) is not restrictive. One could compute the derivatives if not given. The above also applies to constant references. Concerning point $(ii)$, it is simply for clarity of presentation, and the arguments following in the next section hold for the general case (\Moesays{modulo simple matrices manipulations}). \Moesays{Point (iii) restricts the class of disturbances, for which the statements made later in this paper hold, to persistent constant disturbances (e.g. forces at the robot hand while carrying a non-changing payload)}. However, as we will see in the experiments and validation reported in Section \ref{sec:validation}, for some bounded impulsive disturbances (pushes and payload weight changes), the proposed solution still performs reasonably well \textcolor{black}{thanks to the inherent step adjustment capabilities of the controller}.
\end{remark}
\begin{problem}\label{problem_statement}
Given the perturbed dynamics (1), and assuming Assumption \ref{assumption:1} holds, design a control input \(u\) such that:
\begin{enumerate}[$i$]
    \item \textcolor{black}{For a disturbance \(\theta\) that is bounded in the \(\ell_2\)-norm, the tracking error remains bounded, specifically:}
    $\lim_{t \to \infty} \|p_{CoM} - p_{CoM}^n\| \leq \epsilon
    $
    for some small \(\epsilon \in \mathbb{R}\).
    
    \item The closed-loop feedback system is \textit{Lyapunov stable}, i.e., solution trajectories of the closed-loop system remain near an equilibrium for all time.
    \footnote{\textcolor{black}{One could ask for the stronger notion of \textit{asymptotic stability}, i.e. that solution trajectories not only remain near but also converge to the equilibrium as \(t \to \infty\). This is not strictly necessary in our setting since we do \textit{not} require perfect tracking.}} \hfill $\vartriangleleft$
\end{enumerate} 
\end{problem}


\section{Online centroidal MPC with stability and robustness  guarantees}\label{Sec:three}

In what follows, we derive explicit expressions for stabilizing constraints and cost regularization terms. \textcolor{black}{These ingredients, when added to the standard Centroidal MPC formulation can address the Problem \ref{problem_statement}}. 
\subsection{Robust adaptive redesign}
Noting Remark \ref{centroidal_dynamics_parametric_pure_feedback} and the procedure reported in \cite{pure_parametric_feedback_form}, let us define the following coordinates change:
\begin{subequations}\label{coordinates_change}
\begin{align}
z_1 &= p_{CoM} -  p_{CoM}^n \\
z_2 &= k_1 (p_{CoM} -  p_{CoM}^n) + Bh - \dot{p}_{CoM}^n\\
\eta &= Ch.
\end{align}
\end{subequations}
for $k_1$ a suitable diagonal gains matrix, and $C$ selector matrix such that $\eta = Ch = h^w$. Note that the above is a valid coordinate change choice\footnote{A diffeomorphism defined \textit{almost} everywhere (in the sense of the domain of definition of the relative degree taking in eqs. (1) as output $p_{CoM}$).}. In these new coordinates, we can rewrite the $z$ sub-dynamics corresponding to (\ref{momentumdyn_ct}) as: 
\begin{align*}
\dot z_1 &= Bh-\dot{p}_{CoM}^n\\
&= B B^\top\left(-k_1 z_1 + z_2 +  \dot{p}_{CoM}^n \right) - \dot{p}_{CoM}^n \\
&= -k_1 z_1 + z_2 \\
\dot z_2&= k_1 \dot z_1 + B\dot h - \ddot{p}_{CoM}^n\\
&= k_1\left(-k_1 z_1 + z_2 \right) + B [ A_u(p_u)u  + A_\theta(p_\theta)\theta \\&+ \Vec{g} ] - \ddot{p}_{CoM}^n \\
&= -k_1^2z_1 + k_1z_2 + B\Vec{g} +  u + \theta - \ddot{p}_{CoM}^n.
\end{align*}
Now, denote by $\hat \theta$ an estimate of $\theta$ that we \textit{adaptively} update, and set, for some suitable gain matrix $k_2$ the feedback:
\begin{align}\label{adaptive_feedback}
 u &=  -(k_1 + k_2) z_2 + k_1^2z_1 - B\Vec{g} -  \hat \theta + \ddot{p}_{CoM}^n 
\end{align}
and note that the above choice in turn yields:
\begin{align*}
 \dot z_2 &= -k_2 z_2 +  \tilde \theta
\end{align*}
where $\tilde \theta$ is the estimation error, i.e. $\tilde \theta = \theta - \hat \theta$. Finally, for an adaptive law for our estimate, let
\begin{align}\label{adaptation}
\textcolor{black}{\dot{\hat \theta}} &=  \textcolor{black}{z_2}
\end{align}
With the feedback (\ref{adaptive_feedback}) and the adaptive law above, the overall dynamics becomes (\Moesays{with some abuse mixing coordinates, \textcolor{black}{i.e. using both $z, \eta$, and $p_{\text{CoM}}$ state variables on the right-hand side}}), 
\begin{subequations}
\begin{align}
\dot z_1 &= -k_1 z_1 + z_2 \\
\dot z_2 &= -k_2 z_2 +  \tilde \theta \\
\textcolor{black}{\dot {\hat \theta}} &=  \textcolor{black}{z_2} \\
\dot \eta &=  S(p_u - p_{CoM})\bigg(-k_1 z_2  - k_2 z_2 + k_1^2z_1 \nonumber \\ &- B\Vec{g} -  \hat \theta + \ddot{p}_{CoM}^n \bigg)  + S(p_\theta - p_{CoM})\theta
\end{align}
\end{subequations}
 
\begin{remark}\label{remark_integrator_more_forces}
    Note that the first $r = 2$ derivatives of the reference are needed by construction in the above argumentation. Hence this design assumes a higher-level planner for the CoM positions, velocities, and accelerations. Additionally, when $m \not = 1$ the procedure is unchanged (apart from minor computations adjustment).
\end{remark}

From the above, we can state the following intermediate and helpful result:
\begin{lemma}\label{lemma_1}
    Consider the centroidal dynamics (\ref{momentumdyn_ct}), and let Assumption \ref{assumption:1} hold true, then the feedback:\begin{align}\label{adaptive_feedback_with_extra_term}
    u &= u_n + \nu
\end{align}
with $u_n$ given by (\ref{adaptive_feedback}) and $\nu$ an additional term solving the inequality
\begin{align}\label{nu_inequality}
     \big(z_2^\top + \eta^\top &S(p_u - p_{\text{CoM}})\big) \nu \leq \nonumber \\ &-\eta^\top \big( S(p_u - p_{\text{CoM}}) u_n + S(p_\theta - p_{\text{CoM}}) \theta \big)
\end{align}
    together with the coordinates change (\ref{coordinates_change}) and the adaptation law (\ref{adaptation}) solve Problem \ref{problem_statement}.
\end{lemma}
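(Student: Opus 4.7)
The plan is to exhibit a composite Lyapunov function that uses every state of the closed loop, then verify that the three ``awkward'' terms (the parameter estimation error, the $\eta$-subdynamics, and the $z_1$--$z_2$ cross coupling) are each neutralized in turn by the adaptation law, the inequality on $\nu$, and a gain condition.

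I would start by writing the closed loop in the $(z_1, z_2, \tilde\theta, \eta)$ coordinates obtained in~(\ref{coordinates_change}), with $u = u_n + \nu$. Substituting $u_n$ from (\ref{adaptive_feedback}) and exploiting $B A_u(p_u) = I_3$ gives exactly
\begin{align*}
\dot z_1 &= -k_1 z_1 + z_2, \\
\dot z_2 &= -k_2 z_2 + \tilde\theta + \nu, \\
\dot{\tilde\theta} &= -z_2, \\
\dot\eta &= S(p_u - p_{\mathrm{CoM}})(u_n + \nu) + S(p_\theta - p_{\mathrm{CoM}})\theta,
\end{align*}
where $\dot{\tilde\theta} = -\dot{\hat\theta}$ because $\theta$ is constant by Assumption~\ref{assumption:1}(iii), and the adaptation law~(\ref{adaptation}) has been used.

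Next, I would propose the candidate
\begin{equation*}
V = \tfrac{1}{2}\bigl(\|z_1\|^2 + \|z_2\|^2 + \|\tilde\theta\|^2 + \|\eta\|^2\bigr),
\end{equation*}
which is positive definite and radially unbounded in the augmented state $(z_1, z_2, \tilde\theta, \eta)$. Differentiating along the closed loop and collecting terms, the scalars $z_2^\top \tilde\theta$ and $\tilde\theta^\top z_2$ cancel thanks to the form of the adaptive update, leaving
\begin{equation*}
\dot V = -z_1^\top k_1 z_1 + z_1^\top z_2 - z_2^\top k_2 z_2 + \bigl(z_2^\top + \eta^\top S(p_u - p_{\mathrm{CoM}})\bigr)\nu + \eta^\top\bigl(S(p_u - p_{\mathrm{CoM}})u_n + S(p_\theta - p_{\mathrm{CoM}})\theta\bigr).
\end{equation*}
The key observation is that the last two groups coincide with the two sides of~(\ref{nu_inequality}), so that by hypothesis their sum is non-positive. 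This is the cancellation that the extra term $\nu$ is designed to provide: without $\nu$ the $\eta$ channel would be a potentially unstable zero-dynamics, and~(\ref{nu_inequality}) is precisely the condition making it passive under the chosen $V$.

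After this step only the cross term $z_1^\top z_2$ survives. I would close the argument via Young's inequality, $z_1^\top z_2 \leq \tfrac{1}{2}\|z_1\|^2 + \tfrac{1}{2}\|z_2\|^2$, which, provided the diagonal gains satisfy $k_1 \succ \tfrac{1}{2}I$ and $k_2 \succ \tfrac{1}{2}I$, yields
\begin{equation*}
\dot V \leq -\alpha\bigl(\|z_1\|^2 + \|z_2\|^2\bigr), \qquad \alpha > 0.
\end{equation*}
Consequently $V$ is non-increasing, so $(z_1, z_2, \tilde\theta, \eta)$ is bounded, proving Lyapunov stability and item~(ii) of Problem~\ref{problem_statement}; a standard Barbalat argument then gives $z_1, z_2 \to 0$, and since $z_1 = p_{\mathrm{CoM}} - p_{\mathrm{CoM}}^n$ this delivers item~(i) (with $\epsilon = 0$ in the constant-disturbance regime, and $\epsilon$ proportional to $\|\dot\theta\|_\infty$ in the slowly-varying extension hinted at in Remark~\ref{assumptions_restrictiveness}).

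The part I expect to be delicate is not the Lyapunov computation itself but justifying feasibility of~(\ref{nu_inequality}): it is a single affine scalar inequality in $\nu \in \mathbb R^3$ whose coefficient vector $z_2 + S^\top(p_u - p_{\mathrm{CoM}})\eta$ may vanish at isolated states, and whose right-hand side involves the unknown $\theta$. I would address the first point by noting that away from that measure-zero set a feasible $\nu$ always exists (in fact a half-space of choices), and the second by observing that in the actual online MPC in Section~\ref{Sec:three} the inequality will be imposed with $\theta$ replaced by its running estimate $\hat\theta$, so that the stabilizing constraint is implementable; the Lyapunov bound then degrades by a term vanishing with $\tilde\theta$, which the adaptation still drives to a bounded set.
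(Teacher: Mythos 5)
Your proof is correct and follows essentially the same route as the paper's: the same composite Lyapunov function $V=\|z_1\|^2+\|z_2\|^2+\|\tilde\theta\|^2+\|\eta\|^2$ (up to the factor $\tfrac12$), the same cancellation of the $z_2^\top\tilde\theta$ term via the adaptation law, and the same identification of the leftover terms with the two sides of (\ref{nu_inequality}). If anything you are more careful than the paper, which asserts that $-z_1^\top k_1 z_1 - z_2^\top k_2 z_2 + z_1^\top z_2$ is negative semi-definite for any $k_1,k_2>0$ (true only under a gain condition such as your $k_1,k_2\succ\tfrac12 I$), and which dispatches feasibility of (\ref{nu_inequality}) in a single line where you correctly flag the degenerate set where the coefficient vector of $\nu$ vanishes.
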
    
\begin{proof}
one may note that in the new coordinates, Problem \ref{problem_statement} reduces to asking for stability of the closed-loop system. Now pick the candidate Lyapunov function 
\begin{align}\label{Lyaponuv_function}
V(z, \tilde \theta, \eta) &= z_1^\top z_1 + z_2^\top z_2 + \tilde \theta^\top \tilde \theta + \eta^\top \eta
\end{align}
and note that for $k_1, k_2 > 0$, 
\begin{align*}
	\dot V &= z_1^\top \dot z_1 + z_2^\top \dot z_2 - \tilde \theta^\top \dot{\hat \theta} + \eta^\top \dot \eta \\
	&= z_1^\top \left( -k_1 z_1 + z_2 \right) 
	+ z_2^\top \left( -k_2 z_2 + \tilde \theta + \nu \right) 
	- \tilde \theta^\top z_2 \\
	&\quad + \eta^\top \big( S(p_u - p_{\text{CoM}})[u_n + \nu] + S(p_\theta - p_{\text{CoM}}) \theta \big) \\
	&= -z_1^\top k_1 z_1 - z_2^\top k_2 z_2 + z_1^\top z_2 
	+ z_2^\top \nu \\
	&\quad + \eta^\top \big( S(p_u - p_{\text{CoM}})[u_n + \nu] + S(p_\theta - p_{\text{CoM}}) \theta \big) \\
	&\leq z_2^\top \nu 
	+ \eta^\top \big( S(p_u - p_{\text{CoM}})[u_n + \nu] + S(p_\theta - p_{\text{CoM}}) \theta \big).
\end{align*}

because the first three terms are negative semi-definite. Then it is clear that having $\dot V \leq 0$ reduces to finding $\nu$ solving (\ref{nu_inequality}). This inequality admits solutions thanks to the skew-symmetric nature of $S(\cdot, p_{\text{CoM}})$ and $\theta$ being constant by assumption. Consequently, under (\ref{adaptive_feedback_with_extra_term}) the closed-loop system is stable, and so is the $z$ sub-dynamics, thus satisfying both the bounded tracking error and stability requirements of Problem \ref{problem_statement}.
\end{proof}

\begin{remark}\label{remark_zero_dynamics}
\textcolor{black}{note that $u_n$ stabilizes the $(z, \hat \theta)$ sub-dynamics, and the angular momentum dynamics restricted to the set $\mathcal{S} = \{ (z, \eta): z = 0 \}$ becomes an internal zero dynamics in that case. Given that the angular momentum dynamics is non-minimum phase, then $u_n$ alone is not enough to stabilize the full closed-loop system. Consequently, we leverage $u_n$ for adaptation and gravity compensation, while introducing an additional term $\nu$ solution to (\ref{nu_inequality}) that stabilizes the resulting closed-loop. In the following subsection, we will use optimization, within our MPC formulation, to obtain the value of $\nu$ enforcing a stability constraint on the system and satisfying additional constraints.}

\end{remark}

To conclude this subsection, we emphasize that
the \Moesays{impact} forces computed by (\ref{adaptive_feedback}) may not guarantee feasible robot locomotion. This is mainly due to having \textit{no} considerations for the \textit{feasibility} of the contact forces (in the sense of the friction cone constraints \cite{Featherstone}). \textcolor{black}{This limitation is addressed in the following section. In particular, we will use the knowledge gained from Lemma \ref{lemma_1} and the discussion that followed to derive a modified Centroidal MPC controller that considers friction cone constraints}. 

\subsection{Stable Centroidal MPC for robust  locomotion}\label{mpc_redesign}
Consider the following cost functional
\begin{align}\label{cost_function}
  \mathcal{J} &= \sum_{k = 0}^{n_p} T_{z_1(k)} + T_{\eta(k)} + T_{p_{\mathcal{C}}(k)} + T_{u(k)}    
\end{align}
where $n_p \geq 1$ is the \textit{prediction horizon} of the MPC controller \cite{grune}, $k \in \mathbb{Z}_{\geq 0}$ is the time step and 
\begin{align}\label{momentum_task}
    T_{z_1(k)} &= z_1(k)^\top Q_1 z_1(k)\\
    T_{\eta(k)} &= \eta(k)^\top Q_2 \eta(k)
\end{align}
for some positive definite weight matrices $Q_1, Q_2 > 0$ penalizing linear and angular momentum errors. Additionally,
\begin{align}
    T_{p_{\mathcal{C}}(k)} &= (p_{\mathcal{C}}(k) - p_{\mathcal{C}}^n(k) )^\top Q_3 (p_{\mathcal{C}}(k) - p_{\mathcal{C}}^n(k) )
\end{align}
is a task penalizing the deviation of the \textit{feet contact locations} $p_{\mathcal{C}}$ which has the dynamics described by eq. (5) in \cite{GiulioICRA} from a nominal contact location $p_{\mathcal{C}}^n(k)$ at time instant $k$. Finally, 
$T_{u(k)}$ is a regularization task on the control \Moesays{impact} forces ( making forces on the feet corners as symmetric as possible (cf term (Eq. (6) in \cite{GiulioICRA})). 

With this cost function, we associate the following constraints for all $k = 0, \dots, n_p$;
\begin{enumerate}
    \item \Moesays{The prediction model: namely a Forward-Euler integration of the unperturbed dynamics i.e. };
    \begin{equation}\label{dyn_constr}
    \begin{aligned}
        p_{CoM}(k+1) &= p_{CoM}(k) + \Delta Bh \\
 h(k+1) &= \textcolor{black}{h(k) + \Delta ( \sum_i^{n_c} A_{u_i}(p_{u_i})\Gamma_iu_i(k) +  m\Vec{g} )} \\
 p_{u_i}(k+1) &= p_{u_i}(k) + \Delta ([1-\Gamma_i]v_{u_i}) 
    \end{aligned}
    \end{equation}
    where $\Delta$ is the controller sampling rate, and $p_{u_i}, \ v_{u_i}, \ \Gamma_i$ are the position, velocity and status of the contact at the corner $i$ of the robot feet. 
    \item The coordinates change (\ref{coordinates_change}) and feedback relation over the prediction horizon
    \begin{equation} \label{coordinates_change_cstr}
    \begin{aligned}
        z_1(k) &= p_{CoM}(k) -  p_{CoM}^n(k) \\
z_2(k) &= k_1 z_1(k) + Bh(k) - \dot{p}_{CoM}^n(k)\\
\eta(k) &= Ch(k) \\
    \textcolor{black}{u(k)} &= \textcolor{black}{u_n(k) + \nu(k)}
    \end{aligned}
    \end{equation}
    \textcolor{black}{with $u_n(k)$ being the feedback (\ref{adaptive_feedback}) at time $k$ and $\nu$ a decision variable}.
    \item \Moesays{The stability constraints:} \textcolor{black}{instead of enforcing the inequality (\ref{nu_inequality}) due to having no measurement of the actual disturbance $\theta$, we impose the following equivalent two constraints. In particular, the first constraint asks for the stability of the $(z, \hat \theta)$ subdynamics given the feedback (\ref{adaptive_feedback_with_extra_term}). The second complements the first by requiring the internal $\eta$ dynamics to be stable}
    \begin{equation}\label{stability_cstr}
    \begin{aligned}
        -z_1^\top(k) k_1 z_1(k) &- z_2^\top(k) k_2 z_2(k) \\&+ z_1^\top(k) z_2(k) + z_2(k)^\top \nu(k) < 0 \\
        \| \eta (k+1) \| &\leq \| \eta(k)\|
    \end{aligned}
    \end{equation}

    \item The contact forces feasibility constraints:
    \begin{equation}\label{friction_cone}
    \begin{aligned}
        A R_{\mathcal{C}}^\top u(k) \leq b
    \end{aligned}
    \end{equation}
    where $R_{\mathcal{C}}$ is the rotation matrix associated with the impact force w.r.t the inertial frame, and $A, \ b$ constants depending on the friction coefficient \cite{Featherstone}. \textcolor{black}{One stresses that the friction cone constraints are applied to the whole feedback (\ref{adaptive_feedback_with_extra_term}) and not only on the term $\nu$.}
    \item Constraint on the maximum allowable contact location adaptation error
    \begin{align} \label{contact_location_constraint}
        \ell_b \leq R_{\mathcal{C}}^\top (p_{\mathcal{C}}(k) - p_{\mathcal{C}}^n(k) ) \leq u_b
    \end{align}
    with $\ell_b, \ u_b$ being lower and upper-bounds.
\end{enumerate}

With the above discussion in mind, we can now make the following claim,
\begin{proposition}\label{main_claim}
    \textcolor{black}{Given system (\ref{momentumdyn_ct}), and let Assumption \ref{assumption:1} hold true. Then the feedback control (\ref{adaptive_feedback_with_extra_term}) with $\nu$ solution to the MPC problem with a cost functional (\ref{cost_function}), and constraints (\ref{dyn_constr}), (\ref{coordinates_change_cstr}), (\ref{stability_cstr}), (\ref{friction_cone}), (\ref{contact_location_constraint})
    solves Problem \ref{problem_statement} whenever the optimal control problem is recursively feasible.}
\end{proposition}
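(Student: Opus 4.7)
The plan is to reduce the claim to Lemma \ref{lemma_1} by showing that the MPC constraints (\ref{stability_cstr}), when feasible, enforce non-increase of the Lyapunov function (\ref{Lyaponuv_function}) along closed-loop trajectories. First I would invoke recursive feasibility: at each sampling time $k$, a decision variable $\nu(k)$ exists such that all constraints (\ref{dyn_constr})--(\ref{contact_location_constraint}) are satisfied. This guarantees that the applied control $u(k) = u_n(k) + \nu(k)$ is well-defined and lies in the admissible set enforced by the friction cone (\ref{friction_cone}) and contact location bound (\ref{contact_location_constraint}), so that the closed-loop trajectories of the robot are physically realizable.

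Next, I would split the Lyapunov function $V$ of (\ref{Lyaponuv_function}) into the $(z_1, z_2, \tilde\theta)$ part and the $\eta$ part, and show that each of the two inequalities in (\ref{stability_cstr}) controls one of these parts. Following the computation in the proof of Lemma \ref{lemma_1}, the time derivative of $z_1^\top z_1 + z_2^\top z_2 + \tilde\theta^\top \tilde\theta$ along (\ref{adaptive_feedback_with_extra_term})--(\ref{adaptation}) coincides exactly with the left-hand side of the first stability constraint in (\ref{stability_cstr}) (the $\tilde\theta^\top \dot{\hat\theta}$ and $z_2^\top \tilde \theta$ terms cancel thanks to (\ref{adaptation})). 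The second inequality in (\ref{stability_cstr}) is then the discrete-time analogue of non-increase of $\eta^\top \eta$ along the prediction model (\ref{dyn_constr}). Together the two inequalities imply that the full $V$ is non-increasing from step $k$ to $k+1$ along the optimal prediction, and by the usual MPC receding-horizon argument this decrease is inherited by the closed-loop trajectory.

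From non-increase of $V$, standard Lyapunov arguments yield that the sublevel sets of $V$ are positively invariant, so $z_1, z_2, \eta$ and the parameter estimation error $\tilde\theta$ remain bounded for all time; in particular $\|p_{CoM} - p_{CoM}^n\| = \|z_1\|$ stays bounded by a constant depending on the initial condition and on the bound on $\theta$, delivering the tracking property $(i)$ of Problem \ref{problem_statement}. Lyapunov stability of the closed-loop, requirement $(ii)$, follows from the positive-definiteness of $V$ in $(z_1, z_2, \tilde\theta, \eta)$ and the non-increase established above.

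The main obstacle will be the rigorous bridging between the continuous-time analysis of Lemma \ref{lemma_1} and the discrete-time MPC enforcement. In particular, the first inequality in (\ref{stability_cstr}) is written at the sampling instants, whereas the Lyapunov derivative analysis in Lemma \ref{lemma_1} is continuous-time; one must either invoke a sample-and-hold argument (exploiting that $\Delta$ is small and the prediction model in (\ref{dyn_constr}) is a consistent Euler discretization of (\ref{momentumdyn_ct})) or restate $V$ as a discrete-time Lyapunov function and verify $V(k+1) - V(k) \leq 0$ directly. A secondary subtlety is that $\tilde\theta$ does not appear explicitly in (\ref{stability_cstr}) because $\theta$ is unmeasured; this is absorbed by the cancellation induced by the adaptation law (\ref{adaptation}), but the argument has to be made explicit to confirm that the MPC-enforced inequality is indeed equivalent, in closed loop, to the sign condition on $\dot V$ used in Lemma \ref{lemma_1}.
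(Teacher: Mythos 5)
Your proposal is correct and follows essentially the same route as the paper: the paper's own justification is a short paragraph stating that the result ``follows directly from Lemma \ref{lemma_1}'', viewing the MPC as a projection of the feedback (\ref{adaptive_feedback_with_extra_term}) onto the constraint set, and explicitly neglecting discretization effects under a fast sample-and-hold assumption. Your write-up simply makes that argument explicit --- including the split of $V$ into the $(z_1,z_2,\tilde\theta)$ and $\eta$ parts matching the two inequalities in (\ref{stability_cstr}) and the cancellation from the adaptation law --- and the continuous-versus-discrete bridging issue you flag as the main obstacle is precisely the caveat the authors acknowledge and assume away.
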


\Moesays{The above statement follows directly from Lemma \ref{lemma_1}. Due to (\ref{adaptive_feedback_with_extra_term}) and its accompanying discussion, one in principle solve for $\nu$, treating $u_n$ as a feed-forward control while ensuring the friction cone constraints apply to their sum. Note that in the above statement, we neglect the effects of discretization on MPC problems \cite{ElobaidNolcos}. Instead, we implicitly assume that emulation (Zero-Order-Holding) of control and measurement signals are fast enough such that we are able to use the Lyapunov arguments in continuous-time. This is not restrictive in practice and is typically the case when dealing with robotics applications.}

\Moesays{One could interpret the above reformulation as follows: The MPC problem above, in essence, is the \enquote{projection} of the feedback (\ref{adaptive_feedback_with_extra_term}) in a set defined by the force-feasibility and maximum contact adaptation errors constraints\footnote{in essence, a set being the union of two convex polytopes defined by the Centroidal MPC constraints.}. This also explains the choice of the candidate Lyapunov function for which we have intuition about the \textit{existence} of a feedback solving the unconstrained problem. Additionally, from an implementation point-of-view, since one invokes receding horizon by design, it is possible to relax the problem and enforce the constraints (\ref{coordinates_change_cstr})-(\ref{stability_cstr}) only over the first predicted value, and not necessarily over the whole horizon. The intuition is that only the first optimal control in the sequence is applied before the optimizer is recomputed.  This further enhances the computational time required \cite{clf_nmpc}.} 
\Moesays{\begin{remark}\label{comments_on_proof}
        with respect to \cite{ElobaidICRA}, it is worth stressing that the MPC problem defined in Proposition \ref{main_claim} does not consider the disturbance in the prediction model. Furthermore, no specific tasks handling an assumed persistent disturbance is present. 
\end{remark}}


\section{Validation and experiments}\label{sec:validation}

\Moesays{Hereinafter, we first present an answer to the question \enquote{why we care about stability?}, then proceed to discuss experimental validation on both the humanoid ergoCub and the quadruped Aliengo\footnote{Available in \url{ https://www.unitree.com}} \textcolor{black}{together with some statistics.}}
\subsection{Do we need the stabilizing constraints ?}\label{edge_cases}

\Moesays{Apart from rigor, it turns out that the stabilizing constraints allow for some \textit{practical} cases where not including them leads to close loop instability, and consequently failure of the robot to complete a locomotion task. More precisely, consider a simple unperturbed floating mass system equipped with legs, with the CoM height being $\SI{0.53}{m}$. This system is asked to follow a given nominal reference for the CoM and feet contact locations. To make the comparison fair, no disturbances are applied to the system throughout the whole trajectory. We use the Centroidal MPC with and without the stabilizing constraints to assess their effect. }

\Moesays{Figure\ref{fig:stability_constraints_effects} reports the results of this comparison on a floating mass with two legs resembling a humanoid. At first, the prediction horizon is set to be around $\SI{1.2}{s}$ similar to what is done in \cite{GiulioICRA}, while the controller frequency is set to $\SI{10}{Hz}$. We allow $\ell_b, \ \ell_u \not = 0$ to add step adjustment capability and help the optimizer. Both the nominal Centroidal MPC and our proposed reformulation are able to track the desired nominal references. We then decrease the prediction horizon to $\SI{0.9}{s}$ at which point the nominal formulation fails to perform a single step, while the proposed reformulation, thanks to the stabilizing constraints, keeps both the CoM and angular momentum trajectories bounded, thus completing the locomotion task. A similar effect can be observed by halving the controller frequency, i.e. setting $\Delta T = \SI{0.2}{s}$. }
\Moesays{\begin{remark}\label{feasibility_discussion}
    the above discussion also highlights an interesting observation: while the computational time required by the proposed reformulation is higher compared to the original work in \cite{GiulioICRA} (as will be seen when discussing experiments), this can be mitigated, at least in part, by shortening the prediction horizon. The only limitation in this case is \textit{recursive feasibility} of the problem,  explicitly assumed in Proposition \ref{main_claim}.  
\end{remark}}

\begin{figure*}[!t]
    \centering
    \setlength{\columnsep}{0.02cm} 
    \begin{multicols}{3}
        \begin{tikzpicture}[scale=0.95] 
            \node[inner sep=0pt] (image1) at (0,0) {\includegraphics[width=0.9\linewidth]{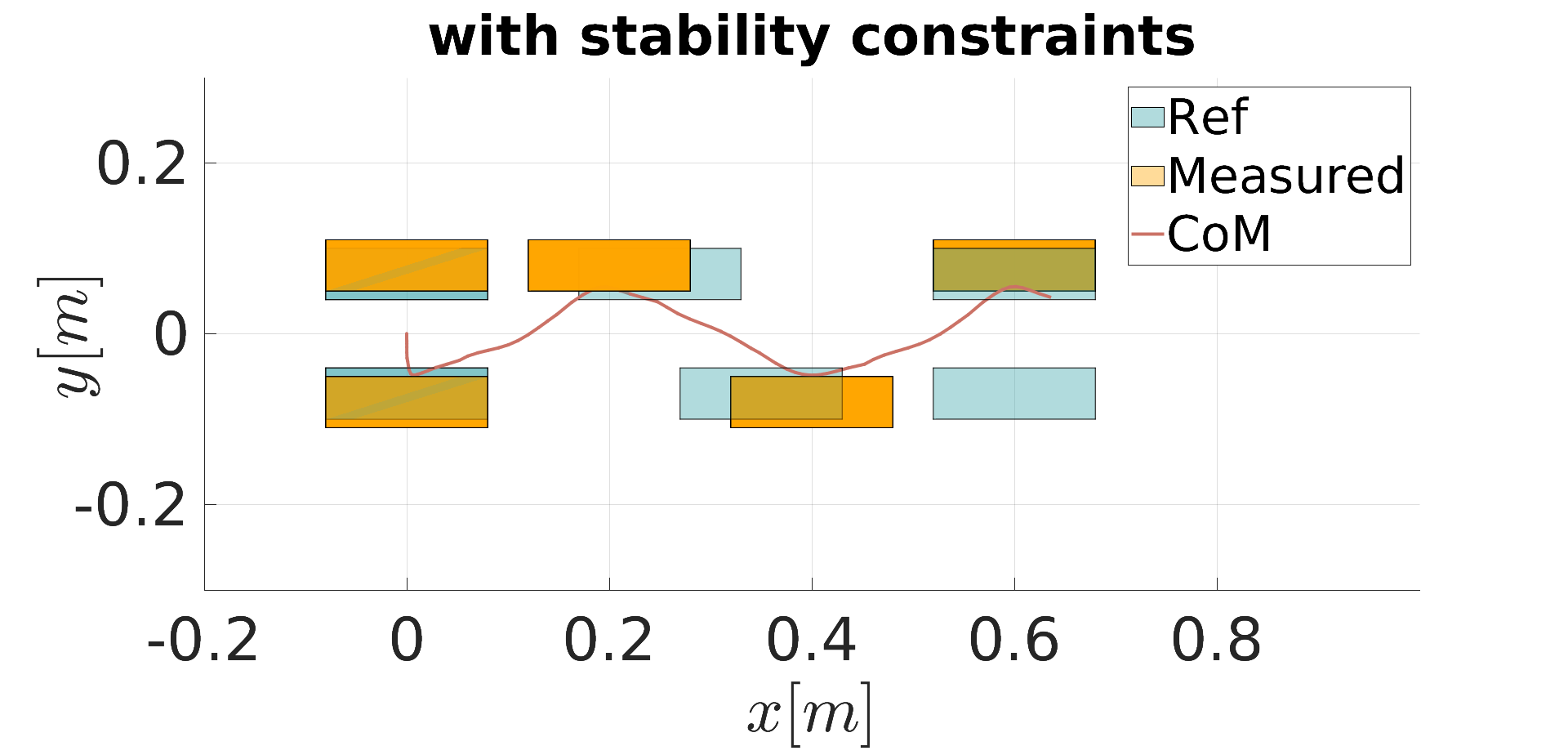}};
            \node[inner sep=0pt] (image2) at (0,-2.5) {\includegraphics[width=0.9\linewidth]{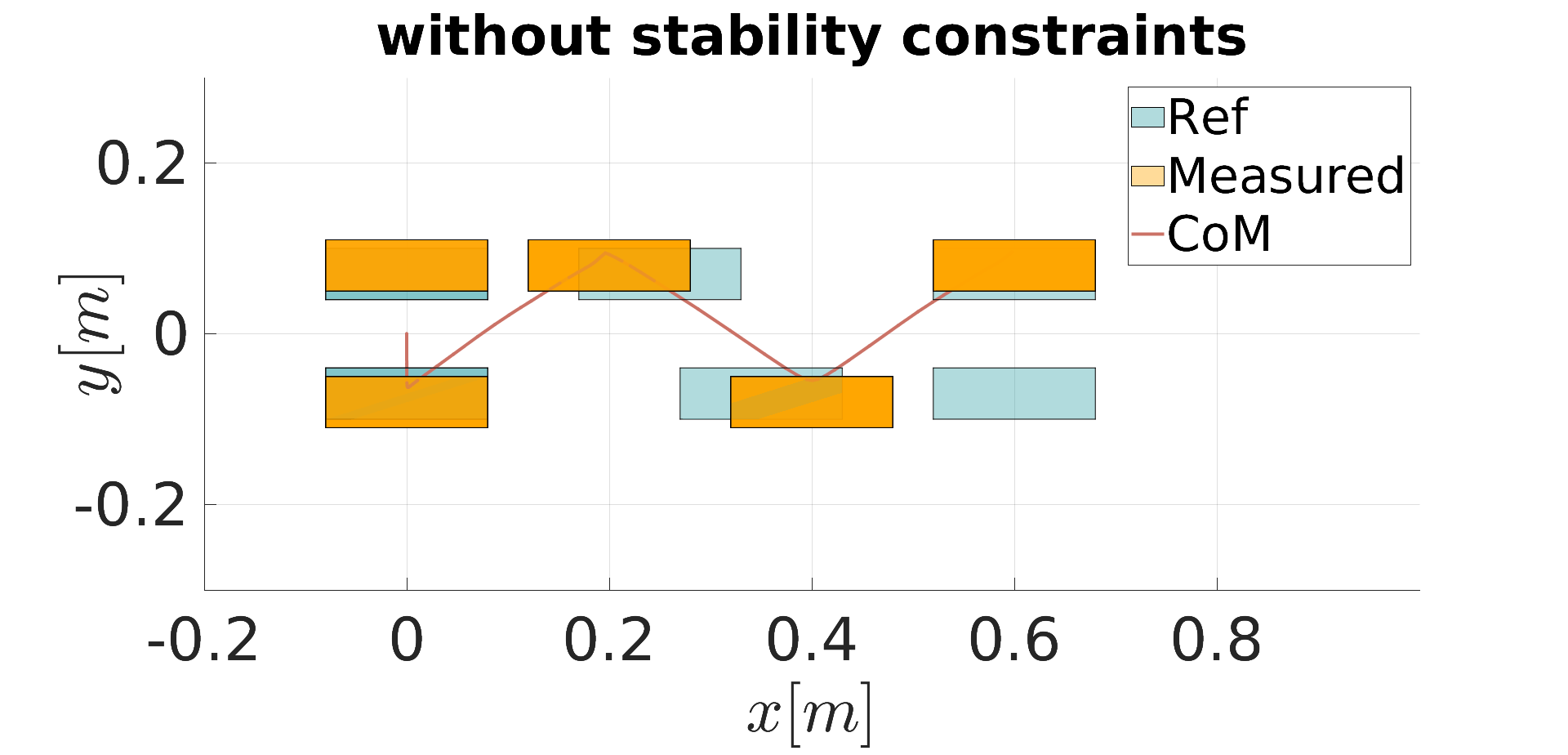}};
            \draw[line width=0.8pt, rounded corners] ([shift={(-0.2,0.2)}]image1.north west) rectangle ([shift={(0.2,-0.2)}]image2.south east);
            \node[align=center, above, yshift=0.2cm] at (image1.north) {\scriptsize $n_p = 12, \ \Delta T = 0.1$};
        \end{tikzpicture}

        \begin{tikzpicture}[scale=0.95] 
            \node[inner sep=0pt] (image1) at (0,0) {\includegraphics[width=0.9\linewidth]{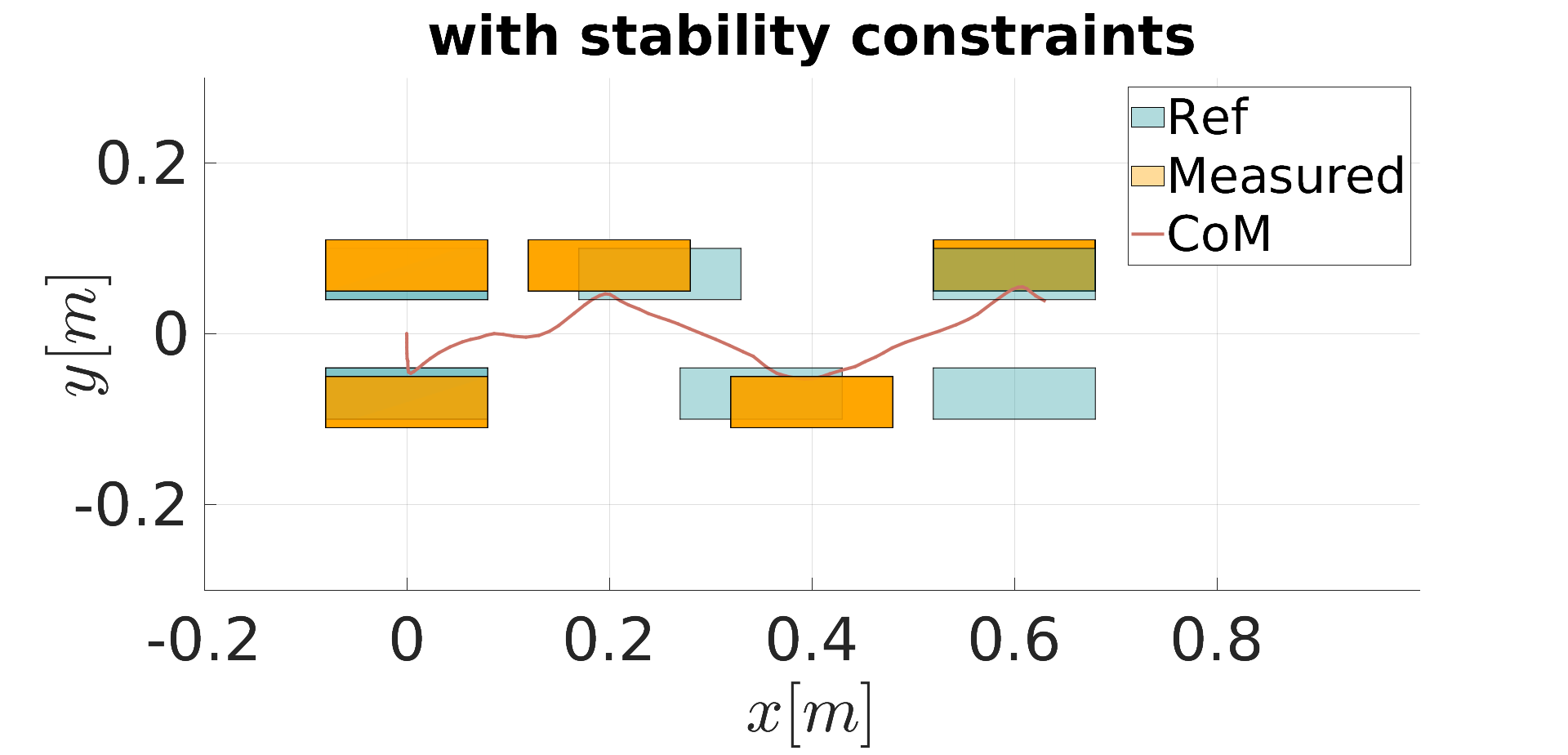}};
            \node[inner sep=0pt] (image2) at (0,-2.5) {\includegraphics[width=0.9\linewidth]{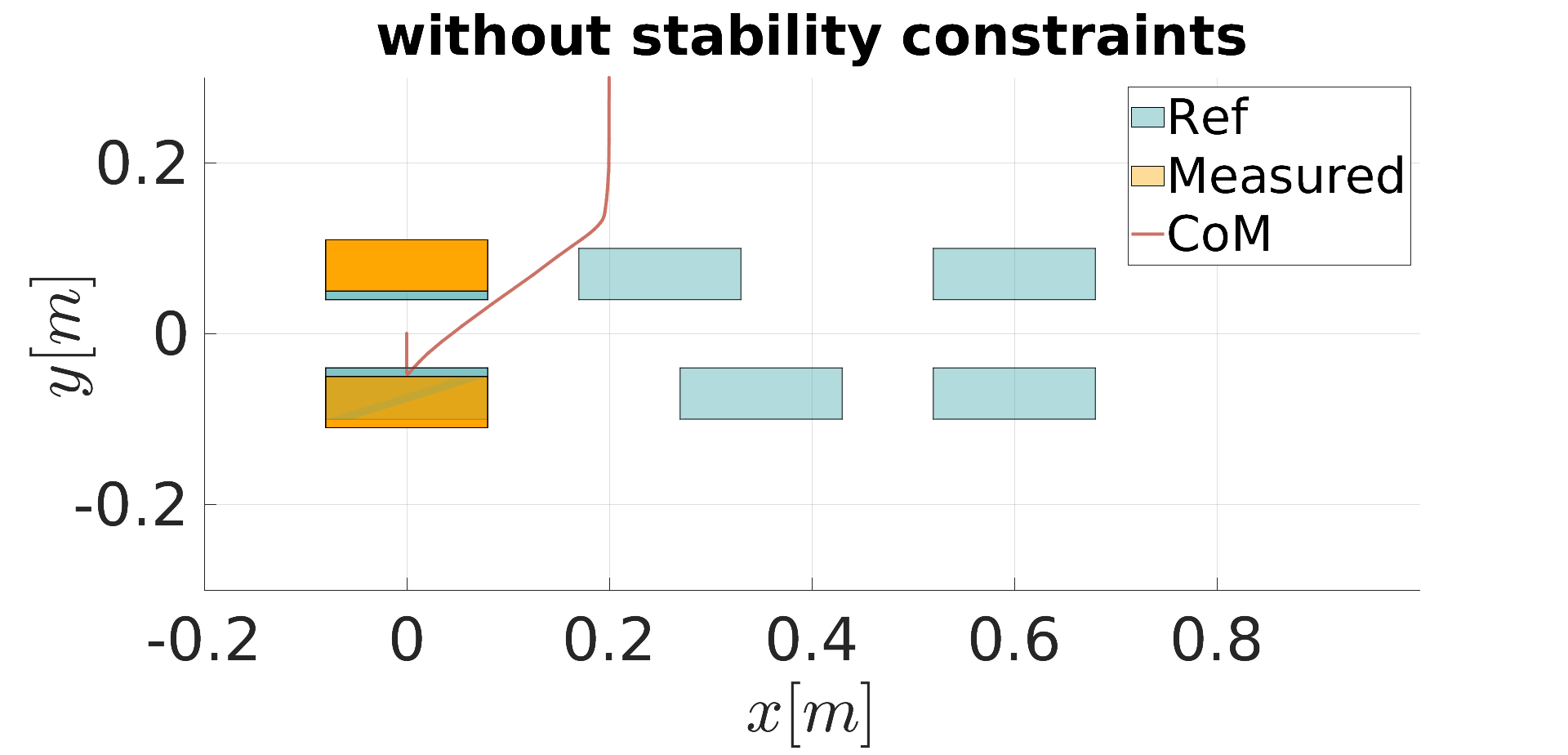}};
            \draw[line width=0.8pt, rounded corners] ([shift={(-0.2,0.2)}]image1.north west) rectangle ([shift={(0.2,-0.2)}]image2.south east);
            \node[align=center, above, yshift=0.2cm] at (image1.north) {\scriptsize $n_p = 10, \ \Delta T = 0.1$};
        \end{tikzpicture}

        \begin{tikzpicture}[scale=0.95] 
            \node[inner sep=0pt] (image1) at (0,0) {\includegraphics[width=0.9\linewidth]{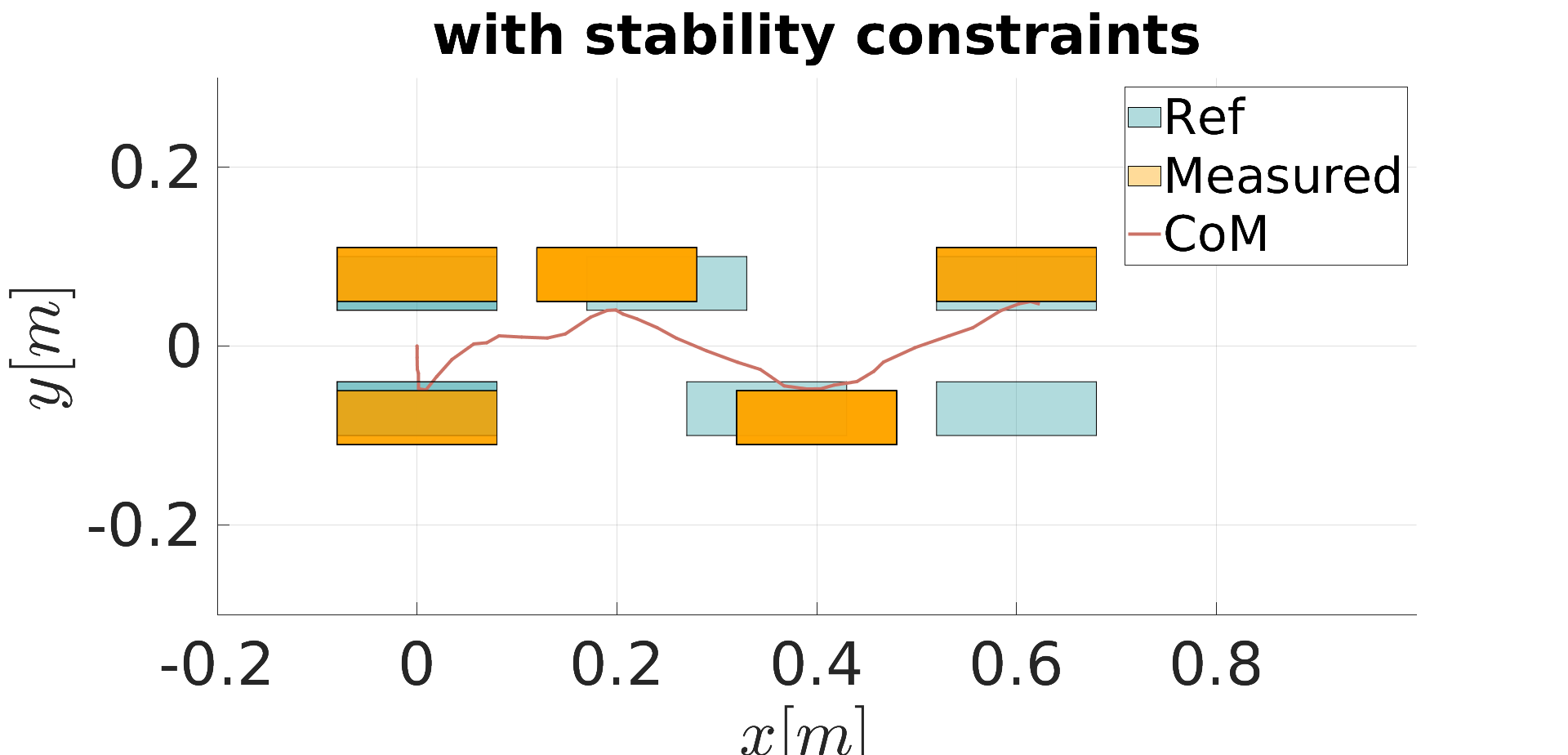}};
            \node[inner sep=0pt] (image2) at (0,-2.5) {\includegraphics[width=0.9\linewidth]{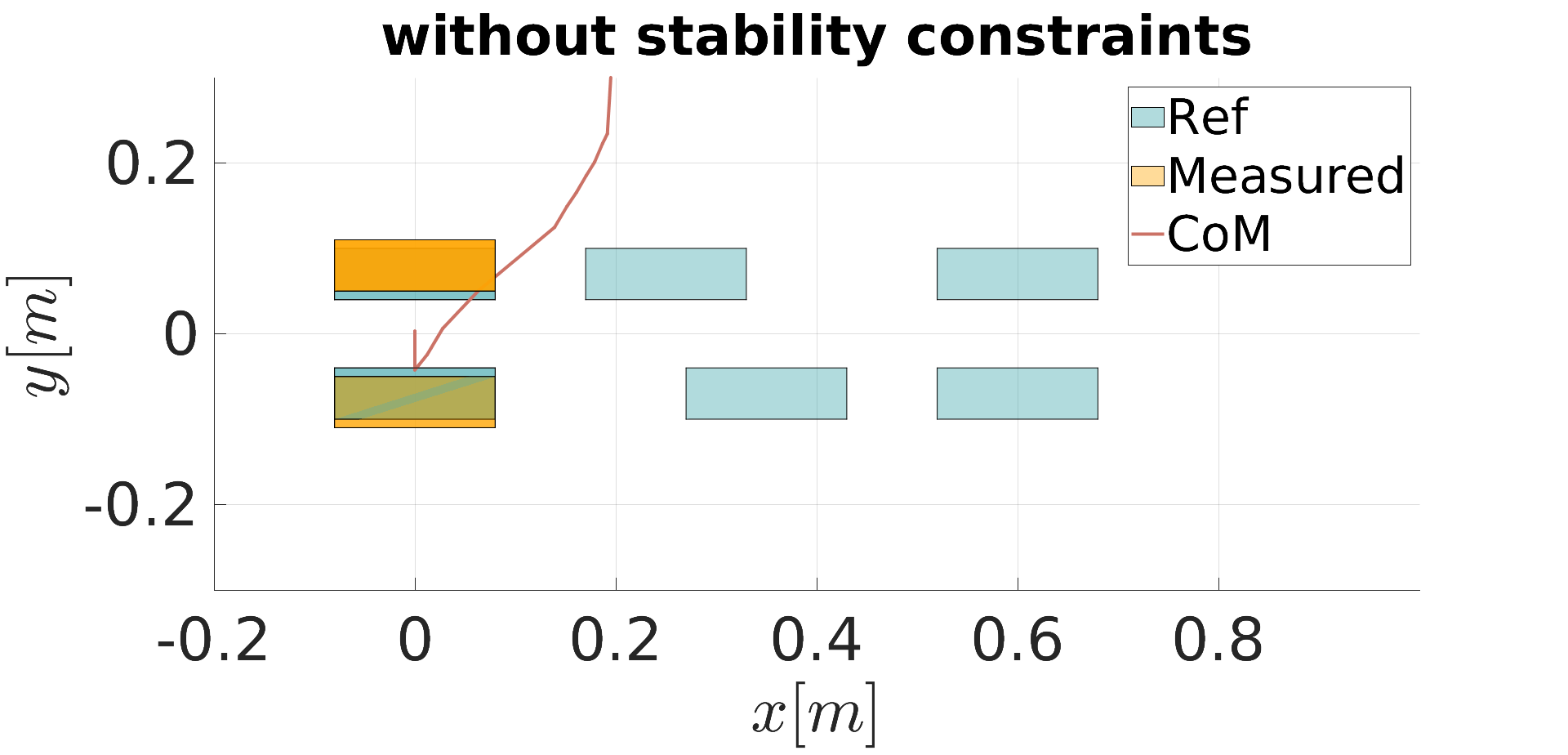}};
            \draw[line width=0.8pt, rounded corners] ([shift={(-0.2,0.2)}]image1.north west) rectangle ([shift={(0.2,-0.2)}]image2.south east);
            \node[align=center, above, yshift=0.2cm] at (image1.north) {\scriptsize $n_p = 12, \ \Delta T = 0.2$};
        \end{tikzpicture}
    \end{multicols}
    \caption{Left - nominal prediction horizon and controller frequency. Center - shorter horizon and nominal frequency. Right - nominal horizon and lower frequency. In the last two cases, the proposed method succeeds in stabilizing the robot's motion as opposed to the nominal one thanks to the additional constraints (\ref{stability_cstr}).}
    \label{fig:stability_constraints_effects}
\end{figure*}

\subsection{The case of a humanoid robot}

The newly developed ergoCub\footnote{\url{https://ergocub.eu/project}} robot, designed and built by the Italian Institute of Technology to be a successor to the iCub 3.0 robot \cite{icub3_avatar},  weighs almost the same at $\SI{56.7}{kg}$ while standing $\SI{25}{cm}$ taller. 
To validate our proposed approach, the optimization problem presented in Proposition \ref{main_claim} is implemented (using casadi \cite{casadi} with IPOPT \cite{ipopt}, running at \textcolor{black}{$\SI{20}{Hz}$}) as the middle layer in the three-layered control architecture presented in \cite[Figure~2]{GiulioICRA}. The output of this controller is the desired contact forces and locations. These in turn are passed to a whole-body control layer that generates and sends desired joint positions to the low-level robot motor control boards. The whole-body controller is a stack of task QP (solved using an off-the-shelf solver running at \textcolor{black}{$\SI{200}{Hz}$}) over the robot generalized velocity, which are then integrated as position references for the low-level control.

To facilitate computations, the stability constraints and changes in coordinates are enforced only for the first step of the optimization horizon. Additionally, the contraction constraint on $\eta(k)$ is reduced to a limit on the norm, i.e.$\|\eta(k) \| \leq \bar \alpha$ for some $\bar \alpha \in \mathbb R$. Furthermore, the gain matrices \textcolor{black}{$k_1 = 0.1\, I_3, \ k_2 = 0.5\, I_3$} are fixed, thus reducing the optimization variables and removing them from the constraints. 

Several experiments were carried out in which the robot \textcolor{black}{is asked to\footnote{see accompanying video for the testing scenarios.\label{accompanying_video_footnote}}: (1) carry a box with fixed payload weight and complete a locomotion task (with two different speeds, one slower by a factor of $0.7$ compared to the other), (2) carry a varying payload (this is achieved by a person putting different weight plates inside the box up to $\SI{6}{kg}$)  while the person is exerting external pushes, and, as a benchmark against the inherent step adjustment capability of the MPC (3) walk while the person is exerting external pushes on the robot estimated as $\SI{60}{N} - \SI{100}{N}$}. In Figure. \ref{fig:performances}, we report the results of the second case where towards the end the robot is subjected to a large lateral pull by the human on the right forearm. The following observations can be made
\begin{itemize}
    \item To cope with the payload, the tracking error on the nominal contact is never zero. The robot shifts the right foot contact slightly to adapt to the changing payload, i.e. shifting and increasing weight inside the box that the robot is carrying.
    \item The controller is able to keep the internal dynamics (angular momentum modulo robot mass) norm below $\bar{\alpha} = 0.3$, even during the pulling phase (see Figure \ref{fig:performances}).
\end{itemize}

It is also worth mentioning that the feet contact position tracking error is comparable to that reported in \cite{ElobaidICRA}, \textcolor{black}{being at most $\SI{0.055}{m}$ on the second experiment scenario}, despite the larger payload (in comparison), additional constraints and different footprint dimensions. 

\begin{figure*}[!t]

\tcbset{
    colframe=black, colback=white, boxrule=0.5mm, width=\linewidth,
    title=Proposed MPC performance on the humanoid ergocub, 
    fonttitle=\bfseries,
    coltitle=white,    
    colbacktitle=gray 
}

\begin{tcolorbox}
    \centering
    \includegraphics[width=0.8\textwidth]{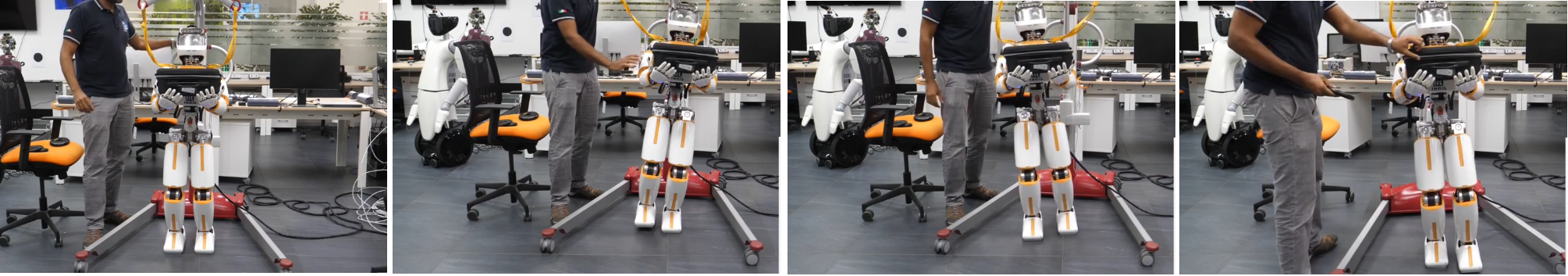}
    \begin{multicols}{2}
        \includegraphics[width=0.8\linewidth]{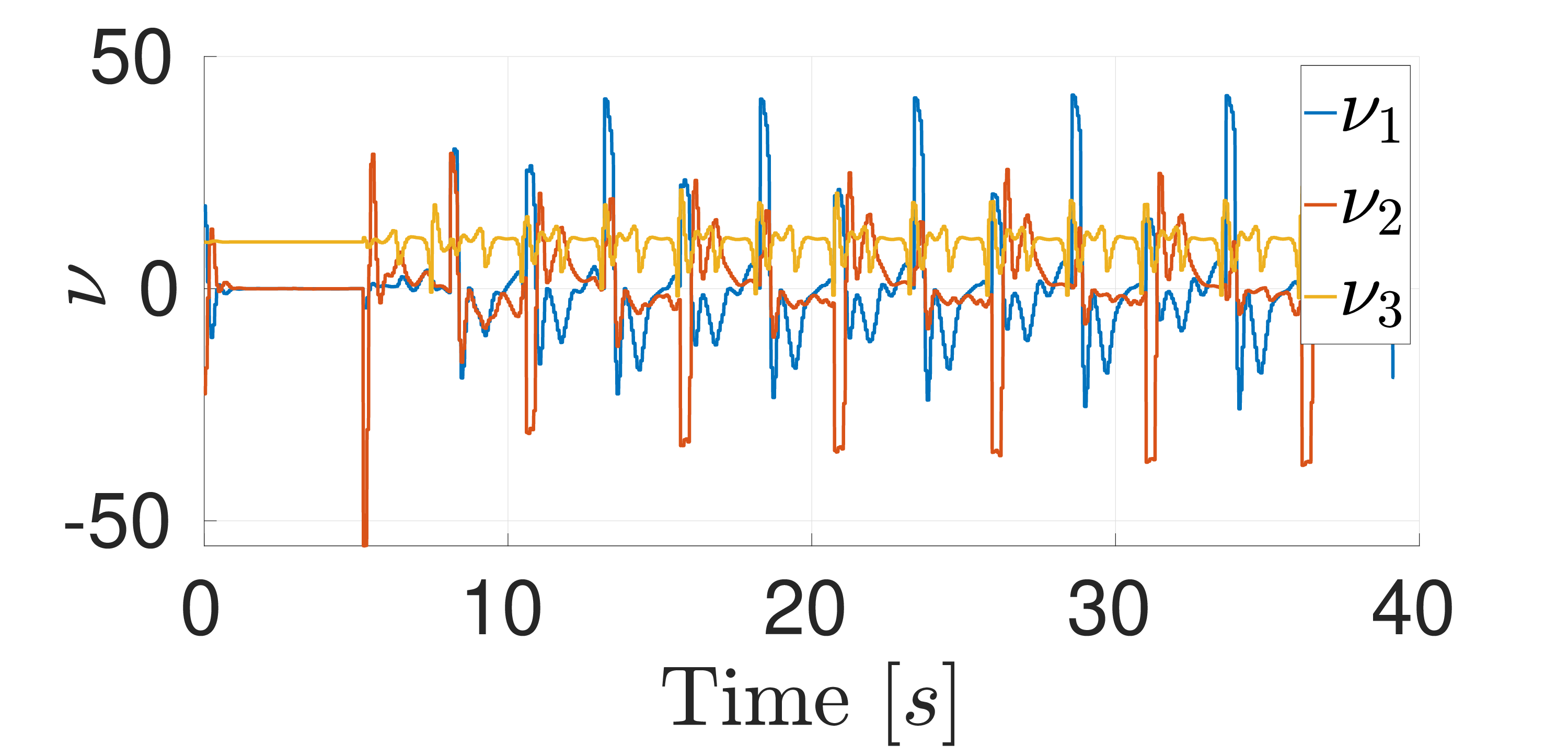}
        \includegraphics[width=0.8\linewidth]{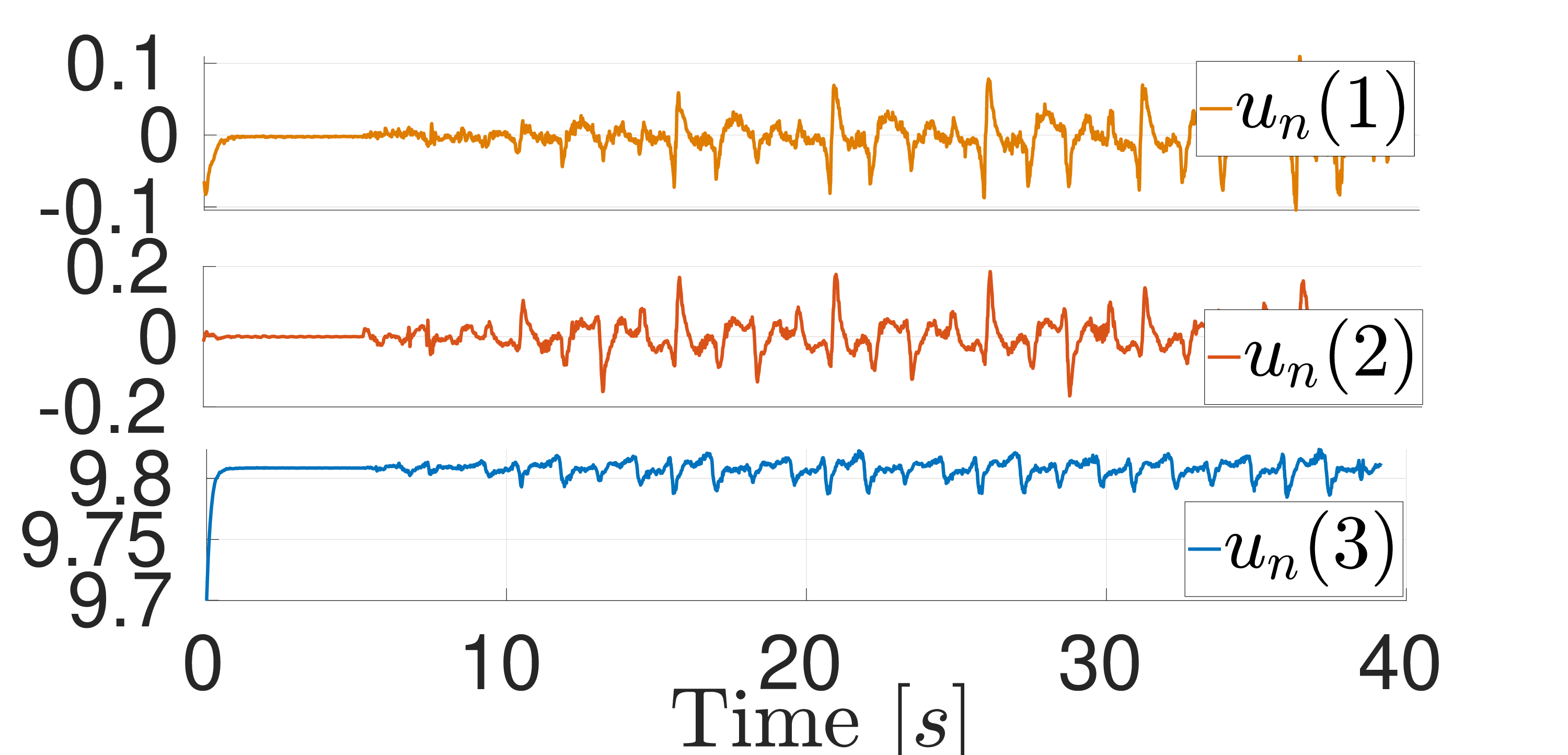}
    \end{multicols}
\end{tcolorbox}

\tcbset{
    colframe=black, colback=white, boxrule=0.5mm, width=\linewidth,
    title=Nominal Vs Proposed MPC on the quadruped Aliengo,
    fonttitle=\bfseries,
    coltitle=white,    
    colbacktitle=gray 
}

\begin{tcolorbox}
\centering
\includegraphics[width=0.8\textwidth]{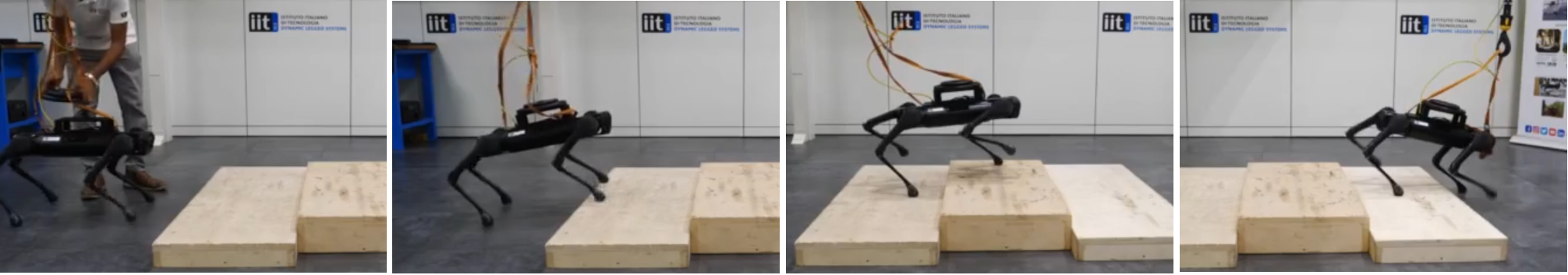}
    \begin{multicols}{2}
        \includegraphics[width=0.8\linewidth]{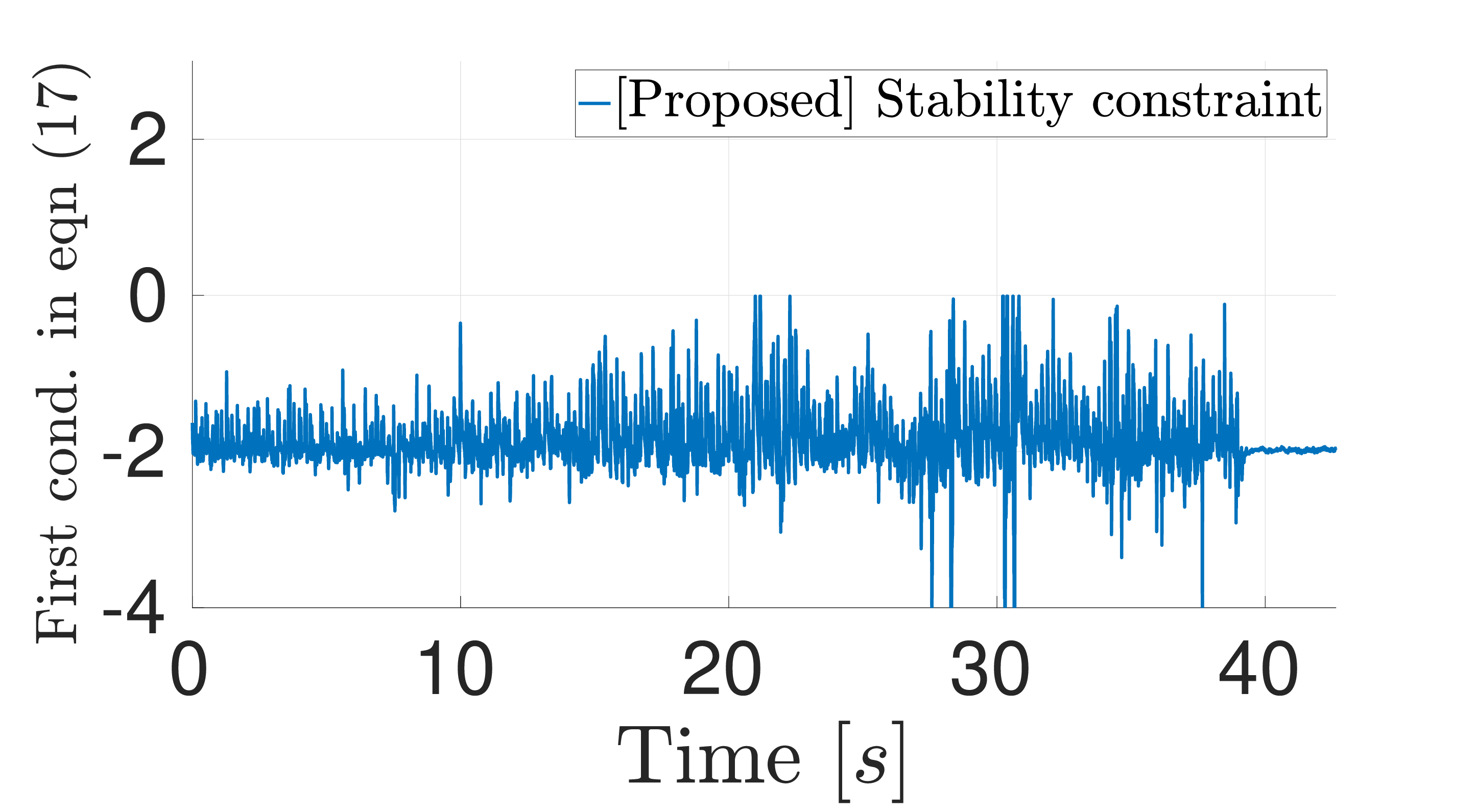}
        \includegraphics[width=0.8\linewidth]{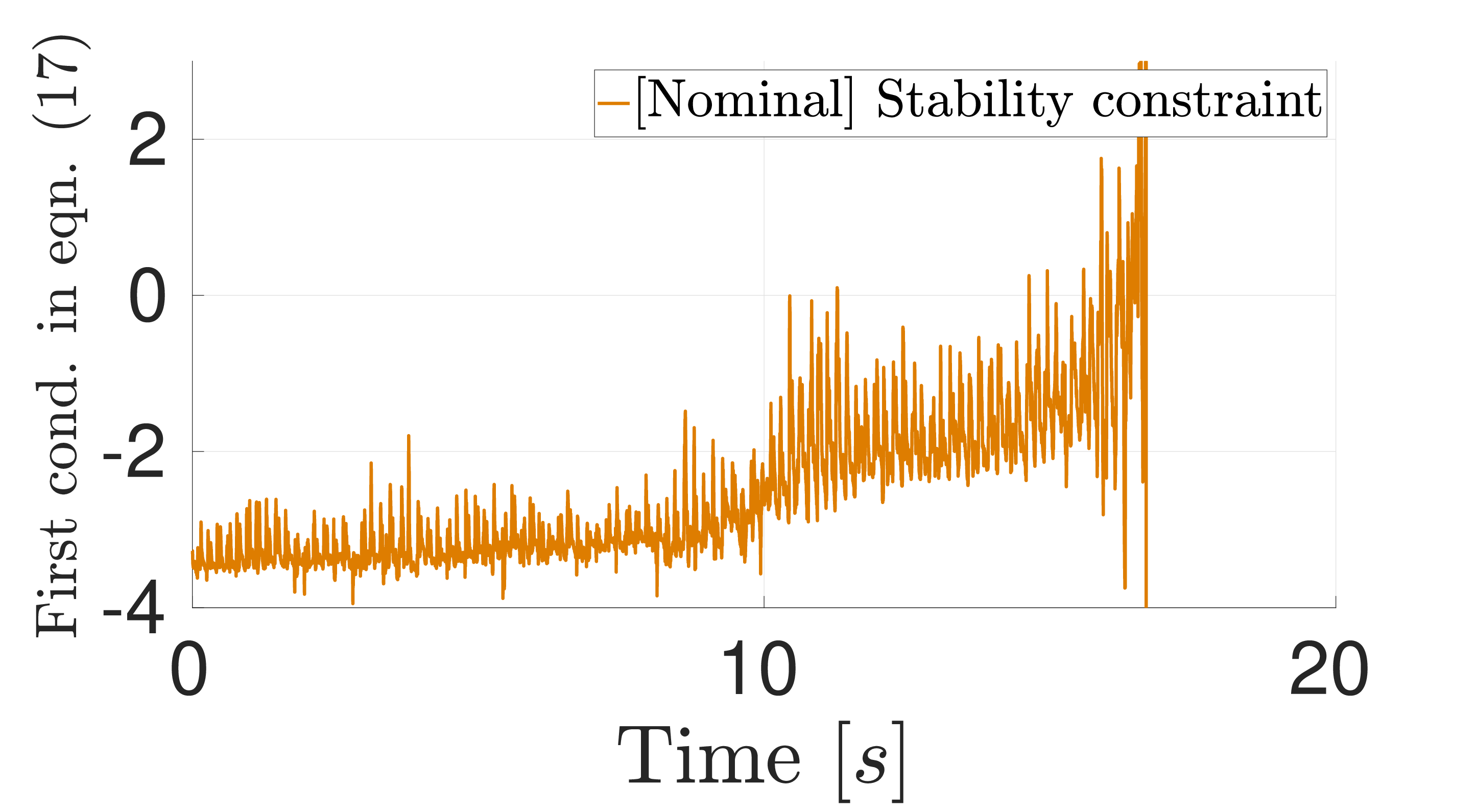}
    \end{multicols}
\end{tcolorbox}

\caption{Top left: \textcolor{black}{The nominal feedback (5) (scaled by robot the mass for clarity) for the humanoid experiment. Top right: the extra stabilizing feedback term $\nu$. Bottom left: the proposed MPC satisfying the first stabilizing constraint in (17). Bottom right: the nominal MPC violating the stabilizing constraint when the quadruped fails (see accompanying media)}. }
\label{fig:performances}

\end{figure*}

\begin{table*}[h!]
	\centering
	\renewcommand{\arraystretch}{1.65} 
	\setlength{\tabcolsep}{8pt}      
		\resizebox{\textwidth}{!}{
		\begin{tabular}{|>{\columncolor[gray]{0.9}}c|c|c|c|c|c|c|}
			\hline
			\rowcolor[gray]{0.9}
			\multirow{2}{*}{\textbf{Metric}} & \multicolumn{2}{c|}{\cellcolor[gray]{0.9}\textbf{0 kg}} & \multicolumn{2}{c|}{\cellcolor[gray]{0.9}\textbf{10 kg}} & \multicolumn{2}{c|}{\cellcolor[gray]{0.9}\textbf{15 kg}} \\ \cline{2-7}
			\rowcolor[gray]{0.9}
			& \textbf{Nominal} & \textbf{Proposed} & \textbf{Nominal} & \textbf{Proposed} & \textbf{Nominal} & \textbf{Proposed} \\ \hline
			\cellcolor[gray]{0.9}\textbf{Success Rate on flat terrain (\%)} & 
			\textcolor{black}{100.0} & \textcolor{black}{100.0} & 
			\textcolor{black}{100.0} & \textcolor{black}{100.0} & 
			\textcolor{black}{64.0} & \textcolor{black}{\textbf{94.0}} \\ \hline
			\cellcolor[gray]{0.9}\textbf{Tracking error on CoM Height on flat terrain (Mean ± Std) [$m$]} & 
			\textcolor{black}{$0.005 \pm 0.004$} & \textcolor{black}{$\bf{0.002 \pm 0.004}$} & 
			\textcolor{black}{$0.043 \pm 0.004$} & \textcolor{black}{$\bf{0.010 \pm 0.007}$} & 
			\textcolor{black}{$0.112 \pm 0.015$} & \textcolor{black}{$\bf{0.017 \pm 0.01}$} \\ \hline
			\cellcolor[gray]{0.9}\textbf{Success Rate on Uneven Terrain (\%)} & 
			\textcolor{black}{100.0} & \textcolor{black}{100.0} & 
			\textcolor{black}{92.0} & \textcolor{black}{\textbf{100.0}} & 
			\textcolor{black}{16.0} & \textcolor{black}{\textbf{88.0}} \\ \hline
			\cellcolor[gray]{0.9}\textbf{Tracking error on CoM Height on Uneven Terrain (Mean ± Std) [$m$]} & 
			\textcolor{black}{$0.009 \pm 0.009$} & \textcolor{black}{$\bf{0.006 \pm 0.008}$} & 
			\textcolor{black}{$0.044 \pm 0.010$} & \textcolor{black}{$\bf{0.011 \pm 0.008}$} & 
			\textcolor{black}{$0.12 \pm 0.025$} & \textcolor{black}{$\bf{0.019 \pm 0.014}$} \\ \hline
		\end{tabular}}
	\caption{Performance comparison over flat and randomized uneven terrains, with different command velocities and payloads.}
	\label{tab:performance_metrics}
\end{table*}

\subsection{The case of a quadruped robot}
To prove the generality of our approach, we test the optimization problem presented in Proposition \ref{main_claim} on a quadruped robot walking on uneven terrain. The robot is commanded to follow a velocity reference through a set of steps while carrying an unmodelled payload of $\SI{7}{Kg}$. The controller is implemented using acados \cite{Verschueren2019}. We described the robot with the centroidal model in \cite{turrisi2024sampling}, considering the robot base orientation with Euler angles (roll, pitch, and yaw) and the base angular velocity using Euler rates. Thanks to the fast solvers, our MPC was able to run at a frequency of $\SI{160}{Hz}$, eliminating the necessity of applying the stability constraints in (\ref{stability_cstr}) only in the first time step. Given the explicit parametrization of the base angles, the residual dynamics constraint resolved in a limit of the maximum allowed base angles variation and velocity. Hence, with $\eta$ in this case, we refer to the vector comprising the roll, pitch, and Euler rates, discarding the yaw angle due to the drift in its estimation. Finally, a whole-body controller running at $\SI{250}{Hz}$ was responsible for commanding joint torques to the robot.

We perform a comparison between our formulation and a nominal centroidal MPC (running at $\SI{230}{Hz}$) on the above scenario. In Figure \ref{fig:performances}, we report the results of these experiments. Given the additional payload, the robot with the nominal MPC was unable to cross the scenario, showing a large error both in height and in the base angles that hindered the stability of the motion on uneven terrain (see the accompanying video). \textcolor{black}{This behavior can be observed by looking at Figure \ref{fig:performances} - bottom-right, where we depict the violation of the first imposed stability constraint (17) during motion. Instead, with our formulation, both quantities remain limited, allowing the robot to complete the task successfully (see Figure \ref{fig:performances} - bottom-left)}.

\subsection{Comments on repeatability and performance}
\textcolor{black}{Table \ref{tab:performance_metrics} reports a statistical analysis of the proposed method's performance against its nominal counterpart. For this, we ask the robot to navigate two different scenarios using a randomized simulation: a simple flat terrain and a pyramid of stairs (see the accompanying video for illustration) while carrying different payloads. We perform 50 trials for every single case, commanding the robot's different forward (randomly chosen between $\SI{0.1}{m/s} - \SI{0.3}{m/s}$) and angular velocities (randomly chosen between $\SI{-0.2}{rad/s} - \SI{0.2}{rad/s}$). Furthermore, in the case of the pyramid stairs, after each trial, we randomize the environment parameters (\textit{rise} from 5 to 10 cm, and \textit{go} from 50 cm to 100 cm).  We note the high success rate, here described as the absence of body collision with respect to the ground, of the proposed method under all conditions compared to its counterpart. In addition, even for cases where the nominal MPC is reasonably successful (payloads $\SI{10}{kg}$), the tracking performances are lacking. The best obtained results in each scenario are highlighted in bold-face.}

\section{Conclusion}\label{sec:conclusion}
In this paper, we endow the recently proposed Centroidal MPC locomotion controller with theoretical guarantees for stability and robustness to bounded disturbances in the form of an external force acting on the robot's Center of Mass.
The presented results are then verified via several experiments on both a humanoid and a quadruped. What remains is to enlarge the class
of disturbances that can be handled by this controller and provide explicit expressions for their bounds. Additionally, relaxing the requirement of recursive feasibility is the subject of future work. \textcolor{black}{It is important to note that our proposed method suffers from the usual limitation of assuming dynamic feasibility of the generated torques and/or velocities. These limits, while currently handled at the whole-body control layer, should be addressed at the trajectory adjustment layer instead and may require the adoption of the more complex full dynamics model of the robot inside the MPC loop.}
\section*{Acknowledgment}
This work was partially supported by the Italian National Institute for Insurance against Accidents at Work (INAIL) ergoCub Project, and Honda Research and Development Japan, through a joint-lab research initiative.


\balance
\bibliographystyle{IEEEtran}      
\bibliography{biblio}                  

\end{document}